\def\eqref#1{equation~\ref{#1}}
\def\1{\bm{1}}
\DeclareMathAlphabet{\mathsfit}{\encodingdefault}{\sfdefault}{m}{sl}
\SetMathAlphabet{\mathsfit}{bold}{\encodingdefault}{\sfdefault}{bx}{n}
\newcommand{\cmark}{\ding{51}}%
\newcommand{\xmark}{\ding{55}}
\newtheorem{theorem}{Theorem}
\title{Astroformer: More Data Might not be all you need for Classification}
\author{Rishit Dagli \\
University of Toronto\\
\texttt{rishit@cs.toronto.edu}
}
\author{
Rishit Dagli$^1$\\
$^1$University of Toronto\\
\texttt{rishit@cs.toronto.edu}
}
\begin{document}

\maketitle

\begin{abstract}
Recent advancements in areas such as natural language processing and computer vision rely on intricate and massive models that have been trained using vast amounts of unlabelled or partly labeled data and training or deploying these state-of-the-art methods to resource constraint environments has been a challenge. Galaxy morphologies are crucial to understanding the processes by which galaxies form and evolve. Efficient methods to classify galaxy morphologies are required to extract physical information from modern-day astronomy surveys. In this paper, we introduce Astroformer, a method to learn from less amount of data. We propose using a hybrid transformer-convolutional architecture drawing much inspiration from the success of CoAtNet and MaxViT. Concretely, we use the transformer-convolutional hybrid with a new stack design for the network, a different way of creating a relative self-attention layer, and pair it with a careful selection of data augmentation and regularization techniques. Our approach sets a new state-of-the-art on predicting galaxy morphologies from images on the Galaxy10 DECals dataset, a science objective, which consists of 17736 labeled images achieving $94.86\%$ top-$1$ accuracy, beating the current state-of-the-art for this task by $4.62\%$. Furthermore, this approach also sets a new state-of-the-art on CIFAR-100 and Tiny ImageNet. We also find that models and training methods used for larger datasets would often not work very well in the low-data regime.
\end{abstract}

\section{Introduction}
\label{introduction}

Recently, many hybrid transformer-convolutional models have gained a lot of popularity in vision tasks, especially with the success of models like MobileViT \citep{mehta2022mobilevit}, ResNet-ViT \citep{dosovitskiy2021an}, ConViT \citep{d2021convit}, PatchConvNet \citep{https://doi.org/10.48550/arxiv.2112.13692}, and CoAtNet \citep{dai2021coatnet}. On one hand, the spatial inductive biases of convolutional neural networks allow them to learn representations with fewer parameters across different vision tasks as well as enable sample-efficient learning; however, they often have a potentially lower performance. On the other hand, transformers learn global representations and have shown a lot of success in vision tasks, and have outperformed convolutional neural nets for image classification. Though transformers tend to have larger model capacities, their generalization can be worse than convolutional neural networks. However, transformers are very data hungry \citep{https://doi.org/10.48550/arxiv.2104.05704} and often require pre-training on large datasets. Typically Vision Transformers (ViTs) show great performance when trained on ImageNet-$1$k/$22$k \citep{deng2009imagenet} or JFT-$300$M \citep{sun2017revisiting} datasets; however, the absence of such large-scale pre-training is very detrimental to its performance as we show in this paper.

In practice, collecting high-quality labeled data or human annotators is very expensive. Synthetic data though shows significant promise but is often a distorted version of the real data and any modeling or inference performed on synthetic data comes with additional risks. Models trained on synthetic data often need to be fine-tuned with real data before being deployed \citep{Tremblay_2018_CVPR_Workshops, https://doi.org/10.48550/arxiv.2205.03257}. Furthermore, methods such as transfer learning will not fully solve the problem often due to bias in pre-training datasets that do not reflect environments \citep{salman2023when}. We also explored semi-supervised learning approaches for this task using labeled data from the Galaxy10 DECals dataset \citep{2019MNRAS.483.3255L} and unlabelled galaxy images from the GalaxyZoo dataset \citep{2011MNRAS}, this approach is promising and we believe is most certainly an interesting direction. We however focused our efforts on building models when obtaining unlabeled data is costly as well Thus, in this paper, we focus on training in a supervised setting from scratch.

A galaxy's morphology describes its physical appearance and encodes important information about the physical processes that have shaped its growth \citep{10.1111/j.1365-2966.2009.14921.x}. Galaxy morphologies are influenced heavily by the physical properties of galaxies, like star formation \citep{hashimoto1998influence, sandage1986star, lotz2006rest} and galaxy mergers \citep{rodriguez2017role} among others \citep{https://doi.org/10.48550/arxiv.1102.0550}. Thus understanding galaxy morphologies leads to a better understanding on these fronts as well. One such instance is where scientists have used the disc and spheroid structures that make up a galaxy to figure out crucial components of its formation \citep{10.1007/978-3-540-28642-4_2}. As such, a galaxy's morphology is a culmination of internal physical processes (e.g., star formation, stellar dynamics such as active galactic nuclei and dark matter, feedback), as well as its environment and interactions with other galaxies. Thus, to fully comprehend the formation and evolution of galaxies, it is essential to accurately classify galaxy morphologies.

Our contributions can be summarized as: \textbf{(1)} We propose a hybrid transformer-convolutional architecture comparable to the approach employed by CoAtNet \citep{dai2021coatnet} with a different stack design and pair it with a careful selection of augmentation and regularization techniques which is able to learn and generalize well \footnote{By generalization, we mean that we measured the gap between the training loss and the evaluation accuracy that describes how well the model can generalize to unseen data.} in the low-data regime. \textbf{(2)} With our approach we establish a new state-of-the-art for the task of identifying galaxy morphologies from images on the Galaxy10 DECals dataset, and to the best of our knowledge, we believe this is the first work to use a hybrid transformer-convolutional model to improve models in this domain. We also set the new state-of-the-art for Tiny ImageNet and CIFAR-100. \textbf{(3)} We explore approaches and techniques for designing non-transfer learned models for the low-data regime in general which can be applied to tasks other than the one we explore.

\section{Data}
\label{Data}

We use the Galaxy10 DECals dataset introduced by \cite{2019MNRAS.483.3255L} which contains $\sim 17.7$k labeled images (see raw images in Appendix \ref{Supplemental Figures}, Figure \ref{fig:rawimages}). These images come from the DESI Legacy Imaging Surveys which include: two DECals campaign \citep{dey2019overview, 2022MNRAS}, the Beijing-Arizona Sky Survey \citep{Zou_2019} as well as the Mayall z-band Legacy Survey \citep{dey2019overview}. The labels for the galaxy morphologies come from the GalaxyZoo Data Release 2 \citep{2008MNRAS, 2011MNRAS}. This dataset includes 10 strictly exclusive morphology classes: disturbed galaxies, merging galaxies, round smooth galaxies, in-between round smooth galaxies, cigar shaped smooth galaxies, barred spiral galaxies, unbarred tight spiral galaxies, unbarred loose spiral galaxies, edge-on galaxies without bulge, edge-on galaxies with bulge. The dataset is imbalanced, in that not all classes have a similar number of images. In particular, there are only 334 labeled images present for "cigar shaped smooth galaxies", so during training, we use stratified sampling. The size of this dataset also allows us to explore training techniques and develop approaches that work well in the low-data regime and generalize well.

The data collected for the DECals campaign uses Dark Energy Camera \cite{Flaugher_2015} on the Blanco $4$m telescope and covers both the North Galactic Cap region at Dec $\leq 32^\circ$ and the South Galactic Cap region at Dec $\leq 34^\circ$. The DECals survey utilizes a method of tiling the sky that involves three separate passes. These passes are slightly shifted in relation to one another, with an approximate offset range of $0.1^\circ \operatorname{-} 0.6^\circ$. The specific pass and duration of exposure for each observation are determined in real-time based on multiple factors allowing for a nearly uniform level of depth across the survey. The Beijing-Arizona Sky Survey covers $5000$ square degrees of the Northern Galactic Cap, using Steward Observatory's $2.3$m Bok Telescope. The Mayall $z$-band Legacy Survey imaged the Dec $\geq 32^\circ$ region using the $4$-m Mayall telescope \citep{dey2019overview}.

we also perform our experiments on the Tiny ImageNet \citep{le2015tiny}, CIFAR-100 \citep{krizhevsky2009learning},a dn CIFAR-10 \citep{krizhevsky2009learning}. These datasets are very popularly used benchmarks for image classification in the low-data regime. Tiny ImageNet contains $100000$ images of $200$ classes ($500$ for each class) downsized to $64 \times 64$ colored images which are a subset of the ImageNet dataset \citep{deng2009imagenet}. Each class has $500$ training images, $50$ validation images, and $50$ test images. The CIFAR-10 dataset consists of $60000$ $32\times32$ color images in $10$ classes, with $6000$ images per class. The CIFAR-100 dataset is just like the CIFAR-10, except it has $100$ classes containing $600$ images each.

\section{Related Work}
\label{Related Work}

Classifying galaxy morphologies on the Galxy10 DECals is a rather well-established task and there have been multiple works in the past by \cite{venn2019lrp2020, blancato2020decoding, chen2021classifying, 10.1007/978-3-031-23092-9_1, Hui_2022, 9990776, holandainfluence, https://doi.org/10.48550/arxiv.2211.00677, https://doi.org/10.48550/arxiv.2210.05484} have employed multiple techniques for this task. However, no work has explored using hybrid models for this task which allowed us to set a new state-of-the-art for Galaxy10 DECals.

Transformer-convolutional hybrids are a recent innovation and in the past multiple models have proposed different approaches to constructing such hybrids namely: MobileViT \citep{mehta2022mobilevit}, ResNet-ViT \citep{dosovitskiy2021an}, ConViT \citep{d2021convit}, PatchConvNet \citep{https://doi.org/10.48550/arxiv.2112.13692}, and CoAtNet \citep{dai2021coatnet}. However, they do not explore the performance or modification that could be made to these methods to perform well for lower amounts of data with training from scratch. The work by \cite{https://doi.org/10.48550/arxiv.2210.07240} extensively explores training ViTs in the low-data regime and have shown success on small datasets. Their work based on learning self-supervised inductive biases from small-scale datasets use these biases as a weight initialization scheme for fine-tuning. The work by \cite{https://doi.org/10.48550/arxiv.2112.13492} explores modifying ViTs to learn locality inductive bias. In this paper, we explore training in low-data regimes with hybrid models and have motivated the use of hybrid models in the low-data regime, unlike these works.

\section{Methodology}
\label{Methodology}

We develop a variant of the CoAtNet \citep{dai2021coatnet} model using a different stack design and careful selection of augmentation and regularization techniques. The Transformer block makes use of relative attention which efficiently combines depthwise convolutions \citep{sandler2018mobilenetv2} and self-attention \citep{vaswani2017attention}. A depthwise convolution uses a fixed kernel to extract features from a local region of the input data whereas self-attention  allows the receptive field to be the global spatial space. Relative attention allows us to combine convolutions and self-attention.

\begin{equation}
\label{eq:relativeattention}
    y_i = \sum_{j \in \mathcal{G}} \frac{\exp{(x_i^\top x_j + w_{i-j}})}{\sum_{k \in \mathcal{G}} \exp{(x_i^\top x_k + w_{i-k}})} x_j
\end{equation}

where $x_i$, $y_i \in \mathbb{R}^d$ are the input and output at position $i$, $w_{i-j}$ represents the depthwise convolution kernel and $\mathcal{G}$ represents the global spatial space. Here, the attention weight $A_{i,j}$ is decided by both $w_{i-j}$ and $x_i^\top x_j$. The update made to the attention weight is rather intuitive by simply summing a global static convolution kernel

\begin{equation}
\label{eq:attentionweight}
\begin{split}
    A_{i,j} &= \sum_{k \in \mathcal{G}} \exp{(x_i^\top x_k)} \quad \quad \text{(standard self-attention)}\\
    A_{i,j} &= \sum_{k \in \mathcal{G}} \exp{(x_i^\top x_k + \boldsymbol{w_{i-k}})}  \quad \quad \text{(relative self-attention)}
\end{split}
\end{equation}

To construct a network that uses relative attention, we adopt an approach similar to CoAtNets by first down-sampling the feature map via a multi-stage network with gradual pooling to reduce the spatial size and then employing the global relative attention. To do so, CoAtNets propose using a network of 5 stages (\texttt{S0}, \texttt{S1}, \texttt{S2}, \texttt{S3}, \texttt{S4}) where \texttt{S0} is a simple 2-layer convolutional Stem and \texttt{S1} employs Inverted Residual blocks \citep{sandler2018mobilenetv2} with squeeze-excitation. In the work by \cite{dai2021coatnet} they eliminate the possibility of using a C-C-C-T stack i.e. \texttt{S1}, \texttt{S2}, and \texttt{S3} employ Inverted Residual blocks \citep{sandler2018mobilenetv2} and \texttt{S4} employs a Transformer block, due to supposedly low model performance. However, in our experiments, we find that a C-C-C-T design works much better than C-C-T-T which was adapted as the layout for CoAtNet. This is precisely due to \textbf{(1)} higher generalization capability of C-C-C-T and \textbf{(2)} highly unstable training of C-C-T-T and C-T-T-T architectures. These proposed architectural choices are summarized in Figure \ref{fig:architecture}.

\begin{figure}[t]
    \centering
    \includegraphics[width=\textwidth]{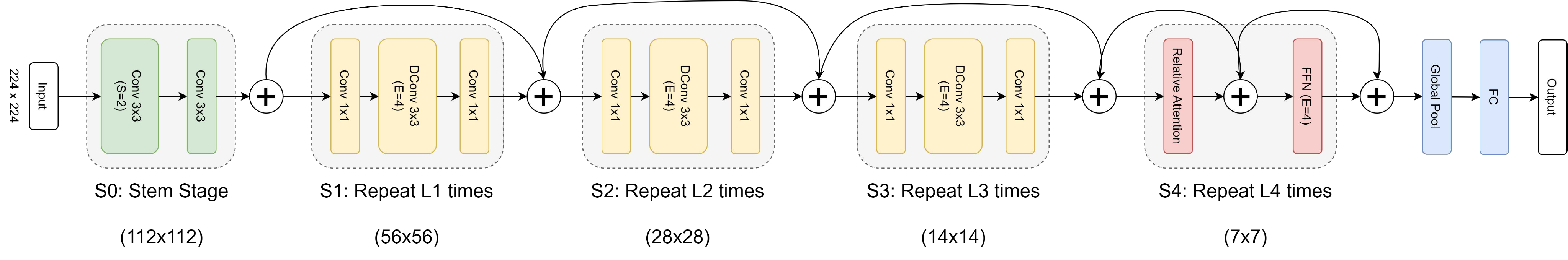}
    \caption{Overview of the proposed model, notice the stack-design design we employ.}
    \label{fig:architecture}
\end{figure}

To evaluate the benefits of partially-occluded augmentation methods like CutMix \citep{yun2019cutmix}, DropBlock \citep{ghiasi2018dropblock}, and Cutout \citep{https://doi.org/10.48550/arxiv.1708.04552} hold for this task we run some experiments. We find that these regional dropout-based augmentation techniques have a strong detrimental effect on datasets such as the one we use \footnote{We also explore Attentive CutMix \citep{https://doi.org/10.48550/arxiv.2003.13048} and though it identifies the most discriminative regions based on the intermediate attention maps from a feature extractor, we do not observe a significantly higher generalization capability.} like the Galaxy10 DECals, this is mainly due to the nature of the task, one example we observe is that even some minor augmentations on an image with the class "edge-on galaxies with bulge" could cause the ground-truth (as identified by a human) of the augmented image to shift to the class "edge-on galaxies without bulge" \footnote{More information on how galaxy morphologies are manually classified can be found at \url{https://data.galaxyzoo.org/gz_trees/gz_trees.html}} which is detrimental to the model performance. This example is visually in Appendix \ref{Supplemental Figures}, Figure \ref{fig:visualizeaugmentations}.

Finally, for augmentation, we consider a combination of Mixup \citep{zhang2017mixup} and RandAugment \citep{cubuk2020randaugment}. As for regularization strategies, we make use of stochastic depth regularization, weight decay, and label smoothing. The hyperparameters values for these regularizations are listed in Appendix \ref{Implementation Details}. Surprisingly, we find that strong augmentations techniques give much higher performance gains than stronger regularization. Overall we believe, that judiciously choosing augmentation and regularization strategies is crucial to model performance in low-data regimes and careful selection of the associated hyperparameters for augmentation and regularization is equally important.

 In brief, the reasons these models perform so well on low-data regime tasks, even when trained from scratch, and we believe it would work for other tasks in the low-data regime are: \textbf{(1)} Careful selection of augmentation and regularization is very important, especially in smaller datasets. \textbf{(2)} Our approach to train a hybrid transformer-convolutional model shows great generalizability and does not face the problem of highly unstable training and \textbf{(3)} The inherent translational equivalence helps  make the training less prone to overfitting in the low-data regime, more experiments on this front were done by \cite{https://doi.org/10.48550/arxiv.2004.09691}. We postpone the proofs for this to Appendix \ref{proofs}.

\section{Results}
\label{Results}

\begin{table}[ht]
    \caption{Model Performance on Galaxy10 DECals dataset. \texttt{Galaxy10 only} denotes training on the Galaxy10 DECals dataset only; \texttt{Galaxy10 + Zoo} denotes the use of extra unlabeled data from GalaxyZoo, though the models that use extra unlabeled data are not the focus of this work, we present those results as well. The rows under the horizontal line in this table represent experiments performed in this work.}
    \centering
    \begin{adjustbox}{width=\textwidth,center}
    \begin{tabular}{lcc}
    \toprule
    \textbf{Method Description} & \multicolumn{2}{c}{\textbf{Galaxy10 DECals top-1 accuracy($\uparrow$)}}\\
    & Galaxy10 only & Galaxy10 + Zoo\\
    \midrule
    Random Baseline & 14.77 & -\\
    Fractal Analysis \citep{9990776} & 73.45 & -\\
    Architectural Optimization Over Subgroups \citep{https://doi.org/10.48550/arxiv.2210.05484} & 77.00 & -\\
    DeepAstroUDA \citep{https://doi.org/10.48550/arxiv.2211.00677} & 79.00 & -\\
    Deep Galaxies CNN \citep{10.1007/978-3-031-23092-9_1} & 84.04 & -\\
    EfficientNet \citep{tan2019efficientnet} & 86.00 & - \\
    Luma \citep{holandainfluence} & 86.20 & -\\
    DenseNet 121 \citep{iandola2014densenet} & 88.64 & - \\
    EfficientNetv2 \citep{tan2021efficientnetv2} & 90.24 & -\\
    \midrule
    Noisy Student \citep{xie2020self} & - & \textbf{91.80}\\
    SimCLRv2 \citep{NEURIPS2020_fcbc95cc} & - & 90.24\\
    Standard CoAtNet-4 \citep{dai2021coatnet} & 81.55 & -\\
    \textbf{Ours (Astroformer)} & \textbf{94.86} & -\\
    \bottomrule
    \end{tabular}
    \end{adjustbox}
    \label{tab:results}
\end{table}

\begin{figure}[ht]
  \centering
  \begin{subfigure}[t]{0.23\textwidth}
    \centering
    \includegraphics[width=\textwidth]{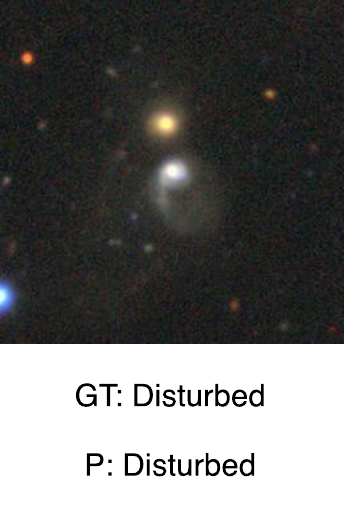}
    \caption{}
    \label{fig:example1}
  \end{subfigure}
  \begin{subfigure}[t]{0.23\textwidth}
    \centering
    \includegraphics[width=\textwidth]{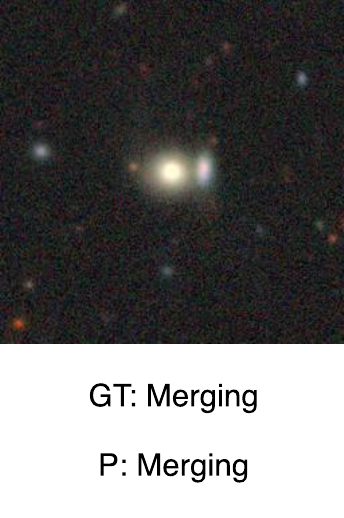}
    \caption{}
    \label{fig:example2}
  \end{subfigure}
  \begin{subfigure}[t]{0.23\textwidth}
    \centering
    \includegraphics[width=\textwidth]{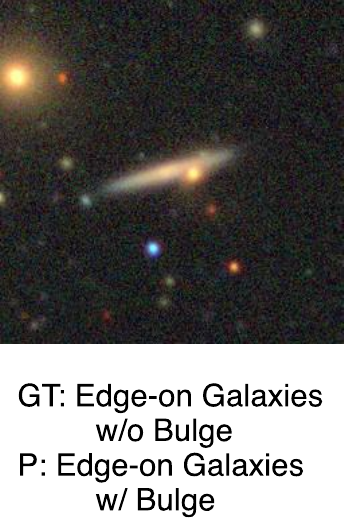}
    \caption{}
    \label{fig:example3}
  \end{subfigure}
  \begin{subfigure}[t]{0.23\textwidth}
    \centering
    \includegraphics[width=\textwidth]{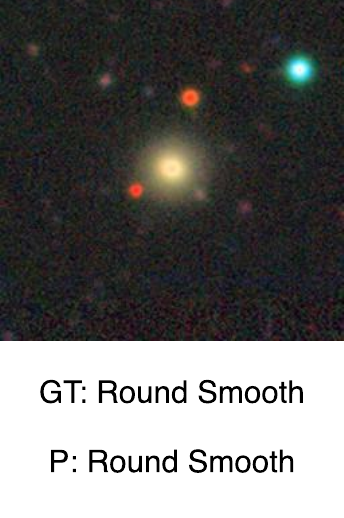}
    \caption{}
    \label{fig:example4}
  \end{subfigure}
  \caption{Random selection of ground truth comparison against our model predictions on the hold-out test set. We also include one example where the model predicted incorrectly (c). The acronyms "GT" and "P" in this figure refer to the ground truth and predicted labels.}
  \label{fig:examples}
\end{figure}

We report all results and compare them against previous models and a random baseline (equivalent to making a guess) in Table \ref{tab:results}. We also provide a few random ground truth comparisons to our predictions in Figure \ref{fig:examples}. The performance of models is calculated using the metrics that are typical for an image classification problem, top-1 accuracy. Additionally, we find that using any of the other larger variants of CoAtNet is detrimental for this task and the model starts heavily overfitting even with our design changes, augmentation, and regularization. We also explore semi-supervised learning for this task, our experiments with semi-supervised learning techniques included applying Noisy Student \citep{xie2020self}, SimCLRv2 \citep{NEURIPS2020_fcbc95cc}, and Meta Pseudo Labels \citep{Pham_2021_CVPR} to the Galaxy10 DECals dataset as well as using unlabeled images from the GalaxyZoo Dataset though. These results are also summarized in Table \ref{tab:results}.

\section{Discussion}
\label{Discussion}

In this paper, we developed a supervised model to train from scratch on the Galaxy10 DECals dataset and explored other methods to train from scratch in the low-data regime. We train a CoAtNet model and pair it with a careful selection of augmentation and regularization strategies as well as use a different stack design which was earlier thought to have severely low model performance and a novel way of creating relative self-attention layers on top of the CoAtNet model \citep{dai2021coatnet}. We apply this proposed model in the low-data regime and achieve state-of-the-art performance on the Galaxy10 DECals dataset. Furthermore, with this approach, we also establish new state-of-the-art without using extra training data on popular low-data regime image classification datasets, CIFAR-100 \citep{krizhevsky2009learning} and Tiny ImageNet \citep{le2015tiny}, and competitive results on CIFAR-10 \citep{krizhevsky2009learning} as indicated in Appendix \ref{other datasets}. From the perspective of science objectives, we believe this model will enable more precise studies of galaxy morphology. In the future, we hope to see more methods for training models with low data from scratch and we believe our model is a potential choice for a variety of other low-data regime tasks. We also hope that this model could potentially be used as a backbone for other vision tasks in the low-data regime.

\section*{Acknowledgements}

The authors would like to thank Google for supporting this work by providing Google Cloud credits. The authors would also like to thank Google TPU Research Cloud (TRC) program \footnote{https://sites.research.google/trc/about/} for providing access to TPUs.

We thank David Lindell of the University of Toronto for insightful conversations. We thank Caleb Lammers, Jo Bovy, and Henry Leung of the University of Toronto for insightful discussions and their domain expertise in the area of galaxy morphologies and evolution.

\bibliography{iclr2023_conference}
\bibliographystyle{iclr2023_conference}

\newpage
\appendix
\section{Appendix}

\subsection{Extended Related Work}
\label{extended related work}

\paragraph{Self-Attention for Vision.} Self Attention has extensively been applied to various computer vision tasks, such as image classification, object detection, and semantic segmentation. The idea of self-attention was first introduced by \cite{vaswani2017attention}, where they proposed the Transformer model for neural machine translation. The Transformer model consists of an encoder and a decoder, each composed of multiple layers of self-attention and feed-forward networks. The self-attention mechanism can be seen as a generalization of the attention mechanism that was previously used in conjunction with RNNs for sequence modeling. The application of self-attention and transformers to computer vision tasks faces some challenges due to the high-resolution and spatial structure of visual data. One challenge is the quadratic computational complexity of self-attention, which limits its scalability to large inputs. Another challenge is the lack of local information in self-attention, which may hinder its ability to capture fine-grained details and local patterns in images. To address these challenges, several variants and extensions of self-attention and transformers have been proposed for computer vision tasks. For example, \cite{dosovitskiy2021an} proposed Vision Transformer (ViT), which applies a vanilla Transformer to image classification by dividing an image into patches and treating them as tokens.

These are some examples of how self-attention and transformers can be adapted and improved for computer vision tasks. For a more comprehensive review of vision transformers, we refer readers to the dedicated surveys by \cite{10.1145/3505244} and \cite{9716741}.

\paragraph{Hybrid Models.} Hybrid convolution-attention models are a recent trend in computer vision that aim to combine the advantages of CNNs and self-attention mechanisms. CNNs are known for their ability to capture local features and spatial invariance, while self-attention can model long-range dependencies and global context. However, CNNs and self-attention have different strengths and weaknesses, and finding the optimal balance between them is not trivial.

One approach to hybridize convolution and attention is to augment the CNN backbone with explicit self-attention or non-local modules \citep{Wang_2018_CVPR}, or to replace certain convolution layers with standard self-attention \citep{Bello_2019_ICCV} or a more flexible mix of linear attention and convolution. These methods often improve the accuracy of CNNs, but they also increase the computational cost and complexity. Moreover, they do not exploit the natural connection between depthwise convolution and relative attention, which can be fused together to form a more efficient and effective hybrid layer \citep{dai2021coatnet}. Another approach to hybridize convolution and attention is to start with a Transformer backbone and try to incorporate explicit convolution or some desirable properties of convolution into the Transformer layers \citep{Liu_2021_ICCV}. These methods aim to overcome the limitations of vanilla Transformers, such as the lack of inductive biases, the quadratic complexity, and the dependence on large-scale pre-training. However, they also face challenges such as how to design convolutional embeddings, how to integrate convolutional operations into the attention mechanism, and how to balance the trade-off between model capacity and generalization.

A recent work that proposes a novel hybrid convolution-attention model is CoAtNet \citep{dai2021coatnet}, which is based on two key insights: (1) by using simple relative attention, depthwise convolution, and self-attention can be naturally fused together to form a hybrid layer called CoAt layer; (2) by stacking CoAt layers and standard convolution layers in a principled manner, generalization, capacity, and efficiency can be dramatically improved.

\subsection{Model Details}
\label{proofs}

\paragraph{Inherent Translation Equivariance.}
\begin{theorem}
A variant of relative attention:
\begin{equation}
\label{eq:relativeattentionthm}
    y_i = \sum_{j \in \mathcal{G}} \frac{\exp{(x_i^\top x_j + w_{i-j}})}{\sum_{k \in \mathcal{G}} \exp{(x_i^\top x_k + w_{i-k}})} x_j
\end{equation}
where $x_i$, $y_i \in \mathbb{R}^d$ are the input and output at position $i$, $w_{i-j}$ represents the depthwise convolution kernel and $\mathcal{G}$ represents the global spatial space preserves translational equivariance.
\end{theorem}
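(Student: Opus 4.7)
The plan is to unpack the definition of translational equivariance and verify directly that shifting the input and then applying the relative-attention operator produces the same result as first applying the operator and then shifting. Concretely, let $T_t$ denote the translation-by-$t$ operator acting on feature sequences indexed by $\mathcal{G}$, i.e.\ $(T_t x)_i = x_{i-t}$. Writing $f$ for the map in Equation~\ref{eq:relativeattentionthm}, the goal is to show $f(T_t x) = T_t\,f(x)$ for every admissible shift $t$.

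First, I would set $\tilde{x} = T_t x$ and expand $\tilde{y}_i = f(\tilde{x})_i$ using the definition, so that every occurrence of $\tilde{x}_i$, $\tilde{x}_j$, $\tilde{x}_k$ is replaced by $x_{i-t}$, $x_{j-t}$, $x_{k-t}$ respectively. Second, I would perform the change of summation indices $j \mapsto j' = j - t$ and $k \mapsto k' = k - t$. Here the two structural facts that make the argument go through become visible: (i) the kernel weight $w_{i-j}$ depends only on the \emph{difference} of the indices, so $w_{i-j} = w_{(i-t)-(j-t)} = w_{(i-t)-j'}$, and likewise for $w_{i-k}$; and (ii) the summation domain $\mathcal{G}$ is closed under translation, so reindexing by $j'$ and $k'$ sweeps out the same set $\mathcal{G}$. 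After this substitution the right-hand side becomes exactly $f(x)_{i-t} = (T_t\,f(x))_i$, which is what we need.

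The step I expect to require the most care, rather than raw computation, is the assumption on $\mathcal{G}$: strictly speaking translational equivariance of $f$ only holds when $\mathcal{G}$ is a translation-invariant index set (as is the case for the idealized global spatial space used in Equation~\ref{eq:relativeattention}); for a finite spatial grid, one has to either interpret the statement modulo boundary effects, assume periodic (circular) padding, or restrict attention to translations $t$ for which the reindexing stays inside $\mathcal{G}$. I would state this assumption up front so that the change of variables is unambiguously valid, and then the rest of the proof reduces to matching the two expressions term by term. Finally, I would conclude by noting that since the dot-product term $x_i^\top x_j$ already depends only on the pair of features (not on absolute position) and the bias $w_{i-j}$ depends only on the relative offset, the composite operator inherits equivariance from both summands of its logit, completing the argument.
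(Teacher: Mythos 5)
Your proposal is correct, but it takes a genuinely different route from the paper's own proof. You interpret translational equivariance in the standard spatial sense: define the shift operator $(T_t x)_i = x_{i-t}$ on positions, expand $f(T_t x)_i$, and reindex the sums $j \mapsto j-t$, $k \mapsto k-t$, so that equivariance falls out of two structural facts --- the kernel $w_{i-j}$ depends only on the index difference, and $\mathcal{G}$ is (assumed) translation-invariant. This is the argument that actually underlies the CoAtNet-style claim, and your explicit caveat about boundary effects / periodic padding on a finite grid is exactly the hypothesis that needs to be stated for the change of variables to be legitimate. The paper instead interprets ``translation'' as adding a constant vector $\Delta$ to the feature vectors themselves ($x_i' = x_i + \Delta$) and tries to show $y_i' = y_i + \Delta$ by expanding the dot products $(x_i+\Delta)^\top(x_j+\Delta)$ inside the softmax; that is a translation in feature space, not in the spatial index, and the intermediate algebra there (e.g.\ collapsing $x_i^\top\Delta + \Delta^\top x_j$ into $2\Delta^\top x_j$, and absorbing the leftover $\exp(\Delta^\top x_j)$ factors into a claimed attention weight for the shifted inputs) does not go through cleanly, since the softmax is not affine in its inputs. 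Your index-shift argument avoids these pitfalls entirely because the dot-product logits $x_i^\top x_j$ are unchanged under reindexing rather than perturbed, so nothing needs to cancel; the only price is that you must make the translation-invariance assumption on $\mathcal{G}$ explicit, which you do. In short, your proof establishes the property the theorem is actually meant to capture (shift the input map, the output map shifts identically), and does so more rigorously than the feature-space computation in the paper.
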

\begin{proof}
Analyzing the groups for which the version of relative attention shown in Equation \ref{eq:relativeattentionthm} is equivariant, we will just focus our efforts on proving equivariance for the translational group. To do so, we need to show that if we translate the input $x_i$ and $x_j$ by the same vector $\Delta$, the output $y_i$ will also be translated by the same vector $\Delta$.

Informally we can understand this proof as identifying that the depthwise convolution kernel for any position pair $(i,j)$ is only dependent on $i-j$, the relative positions rather than the values of $i$ and $j$ individually.

We will also use the well-known result that convolutions enjoy translational equivariance, this also aligns well with the general nature of tasks in vision and also partly helps generalize the model to different positions or to images of different sizes.

Formally we can define $T_k$, to be the translation operator that shifts the input sequence $x$ by $k$ positions, that is, $T_k(x)_i = x_{i-k}$. Similarly, let $T_k(y)_i$ denote the output of the relative attention mechanism when the input is shifted by $k$ positions. We then want to show that $T_k(y)_i = y_{i-k}$ for all $i$. That is a shift in the input sequence results in an equivalent shift in the output sequence. Equivalently for this proof, we use the simpler notation where we let $x_i' = x_i + \Delta$ and $x_j' = x_j + \Delta$ be the translated input vectors. We want to show that $y_i' = y_i + \Delta$.

Substituting the translated input vectors into Equation \ref{eq:relativeattentionthm}, we have:

\begin{equation}
\label{eq:relativeattention_translate}
y_i' = \sum_{j \in \mathcal{G}} \frac{\exp{((x_i + \Delta)^\top (x_j + \Delta) + w_{i-j}})}{\sum_{k \in \mathcal{G}} \exp{((x_i + \Delta)^\top x_k + w_{i-k}})} x_j
\end{equation}

Expanding the dot product, we get:

\begin{equation}
\begin{split}
(x_i + \Delta)^\top (x_j + \Delta) &= x_i^\top x_j + x_i^\top \Delta + \Delta^\top x_j + \Delta^\top \Delta \\
&= x_i^\top x_j + 2\Delta^\top x_j + \Delta^\top \Delta \\
&= (x_i^\top x_j + w_{i-j}) + 2\Delta^\top x_j + \Delta^\top \Delta
\end{split}
\end{equation}

Substituting this back into Equation \ref{eq:relativeattention_translate}, we get:

\begin{equation}
\begin{split}
y_i' &= \sum_{j \in \mathcal{G}} \frac{\exp{((x_i^\top x_j + w_{i-j}) + 2\Delta^\top x_j + \Delta^\top \Delta)}}{\sum_{k \in \mathcal{G}} \exp{((x_i + \Delta)^\top x_k + w_{i-k}})} x_j \\
&= \sum_{j \in \mathcal{G}} \frac{\exp{(x_i^\top x_j + w_{i-j})} \exp{(2\Delta^\top x_j + \Delta^\top \Delta)}}{\sum_{k \in \mathcal{G}} \exp{(x_i^\top x_k + w_{i-k})} \exp{(\Delta^\top x_k)}} x_j \\
&= \sum_{j \in \mathcal{G}} \frac{\exp{(x_i^\top x_j + w_{i-j})}}{\sum_{k \in \mathcal{G}} \exp{(x_i^\top x_k + w_{i-k})}} \exp{(\Delta^\top x_j)} \left(\frac{\exp{(\Delta^\top \Delta)}}{\sum_{k \in \mathcal{G}} \exp{(\Delta^\top x_k)}} x_j\right) \\
&= y_i + \Delta \sum_{j \in \mathcal{G}} \frac{\exp{(x_i^\top x_j + w_{i-j})}}{\sum_{k \in \mathcal{G}} \exp{(x_i^\top x_k + w_{i-k})}} \exp{(\Delta^\top x_j)}
\end{split}
\end{equation}

Thus, we can write this as:

\begin{equation}
\label{eq:relativeattention_translation_result}
y_i' = y_i + \Delta \sum_{j \in \mathcal{G}} \frac{\exp{(x_i^\top x_j + w_{i-j})}}{\sum_{k \in \mathcal{G}} \exp{(x_i^\top x_k + w_{i-k})}} \exp{(\Delta^\top x_j)}
\end{equation}

Notice that the term inside the sum on the right-hand side of Equation \ref{eq:relativeattention_translation_result} is equivalent to the attention weight between $x_i'$ and $x_j'$. Therefore, we can rewrite the sum as:

\begin{equation}
\sum_{j \in \mathcal{G}} \frac{\exp{(x_i^\top x_j + w_{i-j})}}{\sum_{k \in \mathcal{G}} \exp{(x_i^\top x_k + w_{i-k})}} \exp{(\Delta^\top x_j)} = \sum_{j \in \mathcal{G}} \frac{\exp{(x_i'^\top x_j' + w_{i'-j'})}}{\sum_{k \in \mathcal{G}} \exp{(x_i'^\top x_k' + w_{i'-k'})}}
\end{equation}

where $i'-j'$ and $i'-k'$ are the relative positions between $x_i'$ and $x_j'$ and $x_k'$, respectively.

Therefore, we can rewrite Equation \ref{eq:relativeattention_translation_result} as:

\begin{equation}
y_i' = \sum_{j \in \mathcal{G}} \frac{\exp{(x_i'^\top x_j' + w_{i'-j'})}}{\sum_{k \in \mathcal{G}} \exp{(x_i'^\top x_k' + w_{i'-k'})}} x_j' = y_i + \Delta
\end{equation}

which shows that the output $y_i$ is translated by the same vector $\Delta$ as the input $x_i$ and $x_j$, and therefore the relative attention in Equation \ref{eq:relativeattentionthm} enjoys the property of translational equivariance.
\end{proof}

\begin{theorem}
The variant of relative attention described in Equation \ref{eq:relativeattentionthm} has the ability to do input-adaptive weighting.
\end{theorem}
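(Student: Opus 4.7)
The plan is to formalize "input-adaptive weighting" as the statement that the aggregation coefficient in Equation \ref{eq:relativeattentionthm}, viewed as a function of the input sequence, is genuinely a nonconstant function of the $x_i$'s, not merely of the positional indices. Concretely, writing
\[
A_{i,j}(x) \;=\; \frac{\exp(x_i^\top x_j + w_{i-j})}{\sum_{k \in \mathcal{G}} \exp(x_i^\top x_k + w_{i-k})},
\]
the goal is to show that $A_{i,j}$ depends nontrivially on the inputs, in contrast to a depthwise convolution, whose analogous coefficient is determined purely by the kernel $w_{i-j}$ and therefore by the relative position alone.

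First I would compute the partial derivatives $\partial A_{i,j}/\partial x_j$ and $\partial A_{i,j}/\partial x_i$ via the standard softmax-derivative identity. These reduce to expressions of the form $A_{i,j}\bigl(x_i - \sum_k A_{i,k} x_k\bigr)$ and symmetric cross terms, which are generically nonzero whenever the attention distribution over $\mathcal{G}$ is not degenerate on a single token. This directly exhibits that $A_{i,j}$ responds to perturbations of the input content, which is the essence of input-adaptive weighting.

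Next I would examine the two limiting regimes to show that Equation \ref{eq:relativeattentionthm} strictly generalizes both convolution and self-attention. When $x_i^\top x_j$ is negligible relative to $w_{i-j}$ (for example when the inputs are nearly constant across positions), $A_{i,j}$ collapses to a softmax over the kernel alone, recovering a position-indexed, input-independent convolution-like weighting. Conversely, when $w \equiv 0$, $A_{i,j}$ becomes the standard self-attention weight $\mathrm{softmax}(x_i^\top x_j)$, which is manifestly input-adaptive. Thus the kernel $w_{i-j}$ supplies a position-dependent bias while the content term $x_i^\top x_j$ supplies the input adaptivity, and neither feature is suppressed by the other.

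The main obstacle here is less computational than definitional: the phrase "ability to do input-adaptive weighting" is informal, so the bulk of the work is pinning down the right statement to prove. Once it is made precise as "$A_{i,j}$ is a nonconstant function of the input sequence for generic $w$, and reduces to content-only self-attention when $w \equiv 0$", the argument is essentially a one-line softmax derivative together with inspection of the two limiting regimes.
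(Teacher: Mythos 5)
Your proposal is sound, but it takes a genuinely different route from the paper. The paper argues algebraically: it factors the numerator as $\exp(x_i^\top x_j)\exp(w_{i-j})$, notes that $w_{i-j}$ is input-independent, and then divides through by $\exp(x_i^\top x_j)$ to rewrite the weight as $A_{i,j} = \exp(w_{i-j})\,/\,\sum_{k \in \mathcal{G}} \exp(x_i^\top x_k - x_i^\top x_j + w_{i-k})$, concluding that the dependence on the \emph{relative} similarities $x_i^\top x_k - x_i^\top x_j$ is what makes the weighting input-adaptive. You instead pin down a precise formalization (that $A_{i,j}$ is a nonconstant function of the input sequence), verify it via the softmax Jacobian, and supplement it with the two limiting regimes ($w \equiv 0$ recovering content-only self-attention, negligible content term recovering a convolution-like, position-only weighting). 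Your route buys more rigor: the derivative computation gives a checkable criterion for when adaptivity actually holds (non-degenerate attention distribution), and the limiting-case analysis makes explicit the sense in which the layer interpolates between convolution and self-attention, which the paper only gestures at. The paper's route buys a short, self-contained identity that exhibits the content dependence directly without calculus. One small correction to your sketch: the gradient of $A_{i,j}$ with respect to $x_i$ is $A_{i,j}\bigl(x_j - \sum_k A_{i,k} x_k\bigr)$ (with $x_j$, not $x_i$, inside the parenthesis), while the gradient with respect to $x_j$ is $A_{i,j}(1 - A_{i,j})\,x_i$; this does not affect your conclusion, since both are generically nonzero exactly under the non-degeneracy condition you state.
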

\begin{proof}
To show this, in this proof, we will show that the form in Equation \ref{eq:relativeattentionthm} that the attention weight $A_{i,j}$ is influenced by the input $x_i$ in a way that adapts to the input.

We can rewrite the numerator of the attention weight as:

\begin{equation}
    \exp{(x_i^\top x_j + w_{i-j})}=\exp{(x_i^\top x_j)} \exp{(w_{i-j})}
\end{equation}

Since the depthwise convolution kernel $w_{i-j}$ is fixed for all inputs, it does not adapt to the input. Therefore, the input-adaptive property of the attention weight must come from the term $\exp{(x_i^\top x_j)}$.

Now, we can write the denominator of the attention weight as:

\begin{equation}
    \sum_{k \in \mathcal{G}} \exp{(x_i^\top x_k + w_{i-k})} = \sum_{k \in \mathcal{G}} \exp{(x_i^\top x_k)} + \exp{(w_{i-k})}
\end{equation}

Again, the term $\exp{(w_{i-k})}$ is fixed and does not adapt to the input. Therefore, we only need to focus on the term $\exp{(x_i^\top x_k)}$ to see if it adapts to the input.

\begin{equation}
    A_{i,j} = \frac{\exp{w_{i-j}}}{\sum_{k \in \mathcal{G}} \exp{(x_i^\top x_k - x_i^\top x_j+w_{i-k})}}
\end{equation}

Now, we can see that the term $\exp{(x_i^\top x_k - x_i^\top x_j)}$ represents the similarity between the input vectors $x_i$ and $x_k$ relative to the similarity between $x_i$ and $x_j$. This relative similarity term ensures that the attention weight adapts to the input, as it depends on the relationship between the input vectors rather than their absolute values.
\end{proof}

\paragraph{Pre-Activation.} We follow the same pre-activation structure as demonstrated by \cite{dai2021coatnet} for both the Inverted Residual blocks and Transformer blocks:

\begin{equation}
    x \leftarrow x + \texttt{Module(Norm(x))}
\end{equation}

where \texttt{Module} denotes the Inverted Residual, Self-Attention, or FFN module, while \texttt{Norm} corresponds to \texttt{BatchNorm} for Inverted Residual block and \texttt{LayerNorm} for Self-Attention and FFN.

\paragraph{Down-Sampling.} We follow the same down-sampling structure as demonstrated by \cite{dai2021coatnet}. For the first block inside each stage from S1 to S4, down-sampling is performed independently for the residual branch and the identity branch. The down-sampling self-attention module can is expressed as:

\begin{equation}
    x \leftarrow \texttt{Proj(Pool(x))} + \texttt{Attention(Pool(Norm(x)))}
\end{equation}

As for the Inverted Residual block, the down-sampling in the residual branch is instead achieved by using a stride-2 convolution to the normalized inputs

\begin{equation}
    x \leftarrow \texttt{Proj(Pool(x))} + \texttt{Conv(DepthConv(Conv(Norm(x), stride = 2))))}
\end{equation}

\subsection{Model Performance on other low-data regime datasets}
\label{other datasets}

In this section, we explore applying the proposed model to other low-data regime image classification tasks namely CIFAR-100 \citep{krizhevsky2009learning}, Tiny ImageNet \citep{le2015tiny}, and CIFAR-10 \citep{krizhevsky2009learning} without extra training data. Similar to the experiments for Galaxy10 DECals, the hyperparameters for these tasks were also found through a naive hyperparameter search, however, any of the models trained on these datasets do not use stratified sampling. In Table \ref{tab:cifar100results} we report our results for CIFAR-100, in Table \ref{tab:tiresults} for Tiny ImageNet, and in Table \ref{tab:cifar10results} for CIFAR-10 datasets.

\begin{table}[ht]
    \caption{Model Performance on the CIFAR-100 dataset. We only present models that do not use extra unlabeled data which can be compared with these experiments. We notice that even without using extra training data our approach beats multiple well-established models which use extra-training data.}
    \centering
    \resizebox{\textwidth}{!}{\begin{tabular}{lcc}
        \toprule
        \textbf{Method Description} & \textbf{CIFAR-100 top-1 accuracy($\uparrow$)} & \textbf{Extra Training Data}\\
        \midrule
        EfficientNetV2 \citep{tan2021efficientnetv2} & 92.30 & \cmark\\
        TResNet \citep{ridnik2021tresnet} & 93.00 & \cmark\\
        CaiT \citep{touvron2021going} & 93.10 & \cmark\\
        µ2Net \citep{https://doi.org/10.48550/arxiv.2205.12755} & 94.95 & \cmark\\
        ML-Decoder \citep{ridnik2023ml} & 95.10 & \cmark\\
        EffNet-L2 (SAM) \citep{chen2021vision} & \textbf{96.08} & \cmark\\
        \midrule
        CoAtNet-5 \citep{dai2021coatnet} & 81.21 & \xmark\\
        CoAtNet-4 \citep{dai2021coatnet} & 84.60 & \xmark\\
        WRN \citep{Zhao_2022} & 86.90 & \xmark\\
        DenseNet \citep{iandola2014densenet} & 87.44 & \xmark\\
        ColorNet \citep{gowda2019colornet} & 88.40 & \xmark\\
        ShakeDrop \citep{cubuk2018autoaugment} & 89.30 & \xmark\\
        PyramidNet \citep{zhao2022toward} & 89.90 & \xmark\\        
        \textbf{Ours (Astroformer)} & \textbf{93.36} & \xmark\\
        \bottomrule
    \end{tabular}}
    \label{tab:cifar100results}
\end{table}

\begin{table}[ht]
    \caption{Model Performance on the Tiny ImageNet dataset. We only present models that do not use extra unlabeled data which can be compared with these experiments. We notice that even without using extra training data our approach beats multiple well-established models which use extra-training data.}
    \centering
    \begin{tabular}{lcc}
        \toprule
        \textbf{Method Description} & \textbf{TI top-1 accuracy($\uparrow$)} & \textbf{Extra Training Data}\\
        \midrule
        EfficientNet (DCL) \citep{luo2019direction} & 84.39 & \cmark\\
        ViT (PUGD) \citep{tseng2022perturbed} & 90.74 & \cmark\\
        DeiT (PUGD) \citep{tseng2022perturbed} & 91.02 & \cmark\\
        Swin-L \citep{huynh2022vision} & \textbf{91.35} & \cmark\\
        \midrule
        PreActResNet \citep{rame2021mixmo} & 70.24 & \xmark\\
        ResNeXt-50 (SAMix+DM) \citep{https://doi.org/10.48550/arxiv.2203.10761} & 72.39 & \xmark\\
        Context-Aware Pipeline \citep{yao2021context} & 73.60 & \xmark\\
        WaveMixLite \citep{viswanathanwavemix} & 77.47 & \xmark\\
        DeiT \citep{https://doi.org/10.48550/arxiv.2210.00471} & 92.00 & \xmark\\
        \textbf{Ours (Astroformer)} & \textbf{92.98} & \xmark\\
        \bottomrule
    \end{tabular}
    \label{tab:tiresults}
\end{table}

\begin{table}[ht]
    \caption{Model Performance on the CIFAR-10 dataset. We only present models that do not use extra unlabeled data which can be compared with these experiments. We notice that even without using extra training data our approach beats multiple well-established models which use extra-training data.}
    \centering
    \resizebox{\textwidth}{!}{\begin{tabular}{lcc}
        \toprule
        \textbf{Method Description} & \textbf{CIFAR-10 top-1 accuracy($\uparrow$)} & \textbf{Extra Training Data}\\
        \midrule
        CeiT \citep{Yuan_2021_ICCV} & 99.10 & \cmark\\
        ViT (PUGD) \citep{tseng2022perturbed} & 99.13 & \cmark\\
        BIT-L \citep{10.1007/978-3-030-58558-7_29} & 99.37 & \cmark\\
        CaiT \citep{touvron2021going} & 98.40 & \cmark\\ 
        CvT \citep{Wu_2021_ICCV} & \textbf{98.39} & \cmark\\
        \midrule
        ViT-SAM \citep{chen2021vision} & 98.60 & \xmark\\
        PyramidNet \citep{zhao2022toward} & 98.71 & \xmark\\
        LaNet \citep{wang2021sample} & 99.03 & \xmark\\
        \textbf{Ours (Astroformer)} & 99.12 & \xmark\\
        µ2Net \citep{https://doi.org/10.48550/arxiv.2205.12755} & 99.49 & \xmark\\
        ViT \citep{dosovitskiy2021an} & \textbf{99.50} & \xmark\\
        \bottomrule
    \end{tabular}}
    \label{tab:cifar10results}
\end{table}

\subsection{Implementation Details}
\label{Implementation Details}

In this section, we explain the implementation details of the experiments and our proposed model.

\paragraph{Sampling.} There is a class imbalance in the dataset, meaning that not all classes have a similar number of images. For this reason, we follow a stratified sampling strategy during data loading to ensure each batch contains $10 \pm 4\%$ instances of each label class.

\paragraph{Model backbone. } Throughout this work we use CoAtNet as the model backbone due to their success in efficiently unifying depthwise convolutions and self-attention through relative attention and their approach of vertically stacking convolution layers and attention layers. Since the Galaxy10 DECals dataset does not contain a large amount of data, other transformer-based models did not show great results for this task whereas a hybrid model with regularization and augmentation techniques, was able to generalize well and achieved better results.

\paragraph{Baseline model. } We established a naive baseline with random guesses. The baseline model we chose was training CoAtNet-4 without any design modifications. This gets to a top-1 accuracy of 81.55\% on the Galaxy10 DECals dataset. To train this model, we employ standard augmentations (MixUp and CutMix) and train the network for 300 epochs using the default settings in \texttt{timm} with a batch size of 256.

\paragraph{Loss function.} We adopt the standard cross entropy loss with smoothing. We also performed some preliminary experiments using Dense Relative Localization Loss \citep{liu2021efficient} and we believe this might be a potentially promising direction as well.

\paragraph{Code. } Our code is in PyTorch 1.10 \citep{paszke2019pytorch}. We use a number of open-source packages to develop our training workflows. Most of our experiments and models were trained with \texttt{timm} \citep{rw2019timm} and we also use \texttt{mmclassify} \citep{2020mmclassification} for some of the experiments. Our hardware setup for the experiments included either four NVIDIA Tesla V100 GPUs or a TPUv3-8 cluster. We utilized mixed-precision training with PyTorch's native AMP (through \texttt{torch.cuda.amp}) for mixed-precision training and a distributed training setup (through \texttt{torch.distributed.launch}) which allowed us to obtain significant boosts in the overall model training time. We report the number of parameters and FLOPS of the final model in Table \ref{tab:model_comparison_sorted}.

\paragraph{Hyperparameters. } The choice of hyperparameters for training the Astroformer model is shown in Table \ref{tab:hyperparams}. The rest of the hyperparameters were kept to their defaults as provided in \texttt{timm} \citep{rw2019timm}. The hyperparameters related to Lookahead \citep{NEURIPS2019_90fd4f88} are used at their default values as suggested.

\begin{table}[!ht]
    \caption{Hyper-parameters used to train Astroformer for Galaxy10 DECals.}
    \centering
    \begin{tabular}{ll}
        \toprule
        \textbf{Hyper-parameter} & \textbf{Values} \\
        \midrule
        Stochastic depth rate & 0.2\\
        Center crop & False\\
        Mixup Alpha & 0.8\\
        Train epochs & 300\\
        Label smoothing & 0.1 (\texttt{timm} default)\\
        Train batch size & 256\\
        Optimizer type & Lookahead \citep{NEURIPS2019_90fd4f88} \\ & \quad + RAdam \citep{Liu2020On}\\
        LR decay schedule & Cosine\\
        Base learning rate & 2e-5\\
        Warmup learning rate & 1e-5 (\texttt{timm} default)\\
        Warmup & 5 epochs\\
        Weight decay rate & 1e-2\\
        Gradient clip & None\\
        EMA decay rate & None\\
        RandAugment layers & 2\\
        \bottomrule
    \end{tabular}
    \label{tab:hyperparams}
\end{table}

\subsection{Full Tabular Results}

We provide Tables \ref{tab:model_comparison_sorted}-\ref{tab:cifar10scaling} that provides all the numerical results.

\begin{table}[ht]
\begin{minipage}[t]{0.62\textwidth}
\centering
\caption{Sorted table based on the number of parameters in each family for top-1 accuracy and scaling curves of multiple models on the Galaxy10 DECals dataset shown in Figure \ref{fig:image_ab}.}
\label{tab:model_comparison_sorted}
\resizebox{\columnwidth}{!}{
\begin{tabular}{crrr}
\toprule
\textbf{Model family} & \textbf{Params (M)} & \textbf{FLOPs (G)} & \textbf{Top-1 Accuracy ($\uparrow$)} \\ \midrule
\multirow{3}{*}{\textbf{Astroformer}} & 655.04 & 115.97 & 75.27 \\
& 271.54 & 60.54 & 94.86 \\
& 161.75 & 31.36 & 92.39 \\ \midrule
\multirow{3}{*}{\textbf{SwinV2}} & 274.06 & 81.55 & 81.55 \\
& 195.16 & 35.09 & 91.23 \\
& 86.86 & 15.86 & 84.58 \\ \midrule
\multirow{6}{*}{\textbf{ViT}} & 1011.22 & 267.18 & 78.27 \\
& 630.78 & 167.40 & 78.46 \\
& 303.31 & 61.60 & 80.25 \\
& 87.46 & 4.41 & 84.36 \\
& 85.81 & 17.58 & 84.35 \\
& 5.53 & 1.26 & 74.21 \\ \midrule
\multirow{3}{*}{\textbf{EfficientNet v2}} & 117.25 & 12.40 & 90.12 \\
& 52.87 & 5.46 & 85.67 \\
& 20.19 & 2.91 & 84.37 \\ \midrule
\multirow{3}{*}{\textbf{CoAtNet}} & 163.08 & 36.69 & 82.39 \\
& 72.62 & 16.58 & 87.45 \\
& 40.87 & 8.76 & 85.24 \\ \midrule
\multirow{3}{*}{\textbf{Swin}} & 195.01 & 34.53 & 93.21 \\
& 86.75 & 15.47 & 82.38 \\
& 23.68 & 4.63 & 81.57 \\ \bottomrule
\end{tabular}}
\end{minipage}
\hfill
\begin{minipage}[t]{0.32\textwidth}
\centering
\caption{Contribution to the error rate for Astroformer-4 on the Galaxy10 Dataset according to the classes in the test set.}
\label{tab:galaxy10error}
\resizebox{\columnwidth}{!}{
\begin{tabular}{rr}
     \toprule
     \textbf{Class} & \textbf{Contribution to}\\
     & \textbf{error rate (\%)}\\
     \midrule
     0 & 0.31\\
     1 & 0.23\\
     2 & 1.28\\
     3 &0.3\\
     4 &1.02\\
     5 &0.14\\
     6 &0.31\\
     7 &0.12\\
     8 &1.16\\
     9 &0.27\\
     \midrule
     &5.14\\
     \bottomrule
\end{tabular}}
\end{minipage}
\end{table}

\begin{table}[ht]
\centering
\caption{Sorted table based on the number of parameters in each family for top-1 accuracy and scaling curves of multiple models on the CIAFR-100 shown in Figure \ref{fig:cifar100scalingfig}.}
\label{tab:cifar100scaling}
\begin{tabularx}{\textwidth}{>{\centering\arraybackslash}X*{3}{>{\raggedleft\arraybackslash}X}}
\toprule
\textbf{Model family} & \textbf{Params (M)} & \textbf{FLOPs (G)} & \textbf{Top-1 Accuracy ($\uparrow$)} \\ \midrule
\multirow{3}{*}{\textbf{EfficientNetV2}} & 117.36 & 12.4 & 92.3 \\
& 52.98 & 5.45 & 92.2 \\
& 20.3 & 2.91 & 91.5 \\ \midrule
\multirow{4}{*}{\textbf{ViT}} & 305.61 & 15.38 & 74.26 \\
& 303.4 & 61.6 & 75.28 \\
& 87.53 & 4.41 & 86.3 \\
& 85.87 & 17.58 & 87.1 \\ \midrule
\multirow{5}{*}{\textbf{EfficientNet}} & 84.87 & 7.21 & 92.32 \\
& 64.04 & 5.36 & 91.7 \\
& 40.96 & 3.5 & 89.96 \\
& 28.54 & 2.47 & 88.76 \\
& 17.72 & 1.58 & 88.72 \\ \midrule
\multirow{3}{*}{\textbf{CoAtNet}} & 271.68 & 62.65 & 92.13 \\
& 163.22 & 36.68 & 89.45 \\
& 72.71 & 16.58 & 87.9 \\ \midrule
\multirow{3}{*}{\textbf{Astroformer}} & 655.34 & 115.97 & 89.38 \\
& 271.68 & 60.54 & 93.36 \\
& 161.95 & 31.36 & 87.65 \\ \bottomrule
\end{tabularx}
\end{table}

\begin{table}[ht]
\centering
\caption{Sorted table based on the number of parameters in each family for top-1 accuracy and scaling curves of multiple models on the CIAFR-10 shown in Figure \ref{fig:cifarscaling}.}
\label{tab:cifar10scaling}
\begin{tabularx}{\textwidth}{>{\centering\arraybackslash}X*{3}{>{\raggedleft\arraybackslash}X}}
\toprule
\textbf{Model family} & \textbf{Params (M)} & \textbf{FLOPs (G)} & \textbf{Top-1 Accuracy ($\uparrow$)} \\ \midrule
\multirow{3}{*}{\textbf{EfficientNetV2}} & 117.25 & 12.40 & 99.1 \\
& 52.87 & 5.46 & 99.0 \\
& 20.19 & 2.91 & 98.7 \\ \midrule
\multirow{4}{*}{\textbf{ViT}} & 305.52 & 15.39 & 77.8 \\
& 303.31 & 61.60 & 76.5 \\
& 85.81 & 17.58 & 74.9 \\
& 87.46 & 4.41 & 73.4 \\ \midrule
\multirow{5}{*}{\textbf{EfficientNet}} & 63.81 & 5.37 & 99.0 \\
& 40.76 & 3.51 & 97.23 \\
& 28.36 & 2.47 & 95.43 \\
& 17.57 & 1.59 & 93.89 \\
& 10.71 & 1.03 & 93.45 \\ \midrule
\multirow{3}{*}{\textbf{CoAtNet}} & 72.62 & 16.58 & 92.17 \\
& 163.08 & 36.69 & 91.43 \\
& 271.54 & 62.65 & 91.34 \\ \midrule
\multirow{3}{*}{\textbf{Astroformer}} & 161.75 & 31.36 & 99.12 \\
& 271.54 & 60.54 & 98.93 \\
& 655.04 & 115.97 & 93.23 \\ \bottomrule
\end{tabularx}
\end{table}

\subsection{Supplemental Figures}
\label{Supplemental Figures}

We provide Figures \ref{fig:image_ab}-\ref{fig:examples} that provides supplementary images.

\begin{figure}[ht]
    \centering
    \begin{tabular}{cc}
        \includegraphics[width=0.5\textwidth]{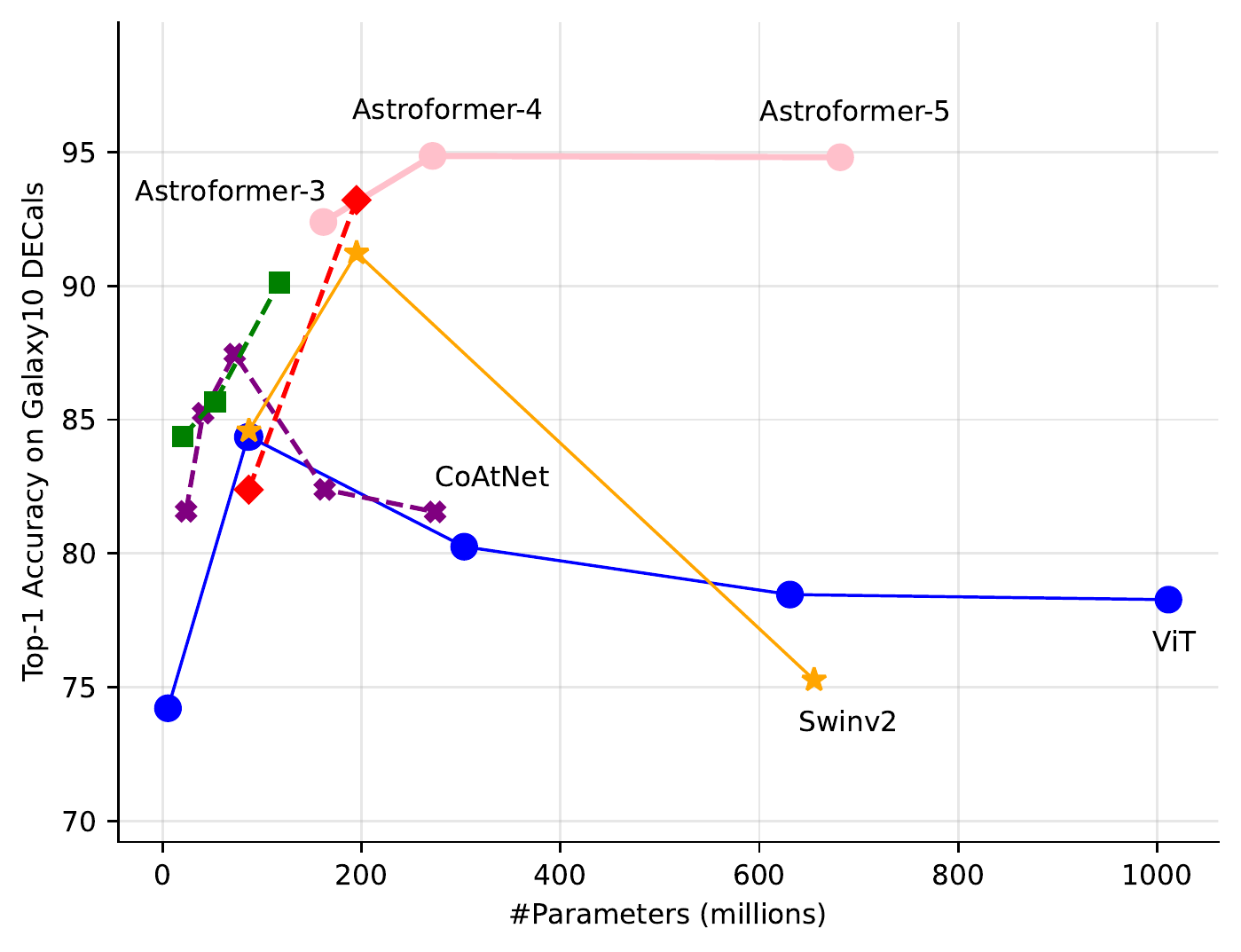} & 
        \includegraphics[width=0.5\textwidth]{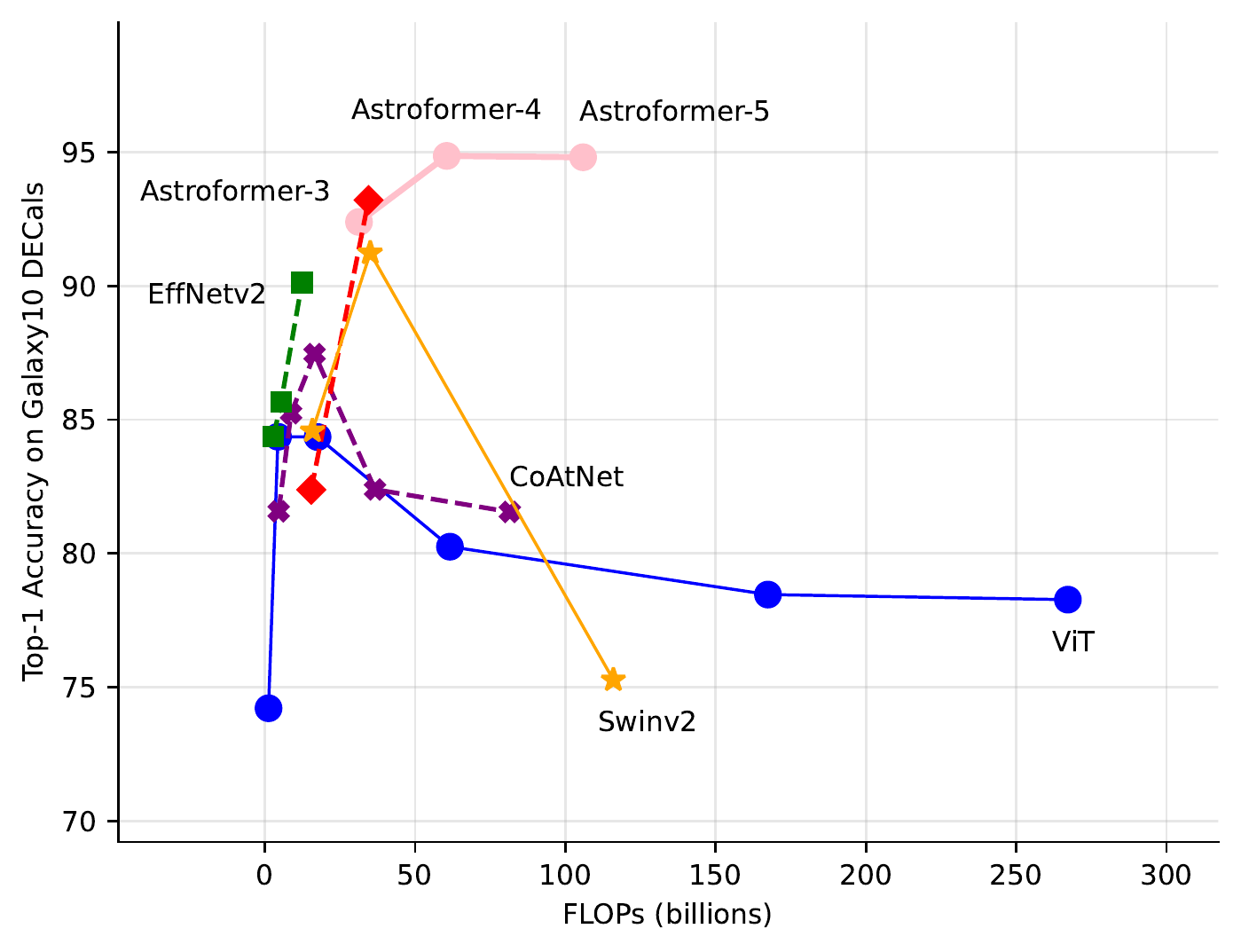} \\
        (a) & (b)
    \end{tabular}
    \caption{(a) The top-1 accuracy to parameter scaling curves for multiple models on the Galaxy10 DECals dataset. (b) The top-1 accuracy to FLOPs scaling curves for multiple models on the Galaxy10 DECals dataset. All these scaling curves are for the evaluation size of $224^2$. The data for this graph can be found in Table \ref{tab:model_comparison_sorted}}
    \label{fig:image_ab}
\end{figure}

\begin{figure}[ht]
    \centering
    \begin{tabular}{cc}
        \includegraphics[width=0.5\textwidth]{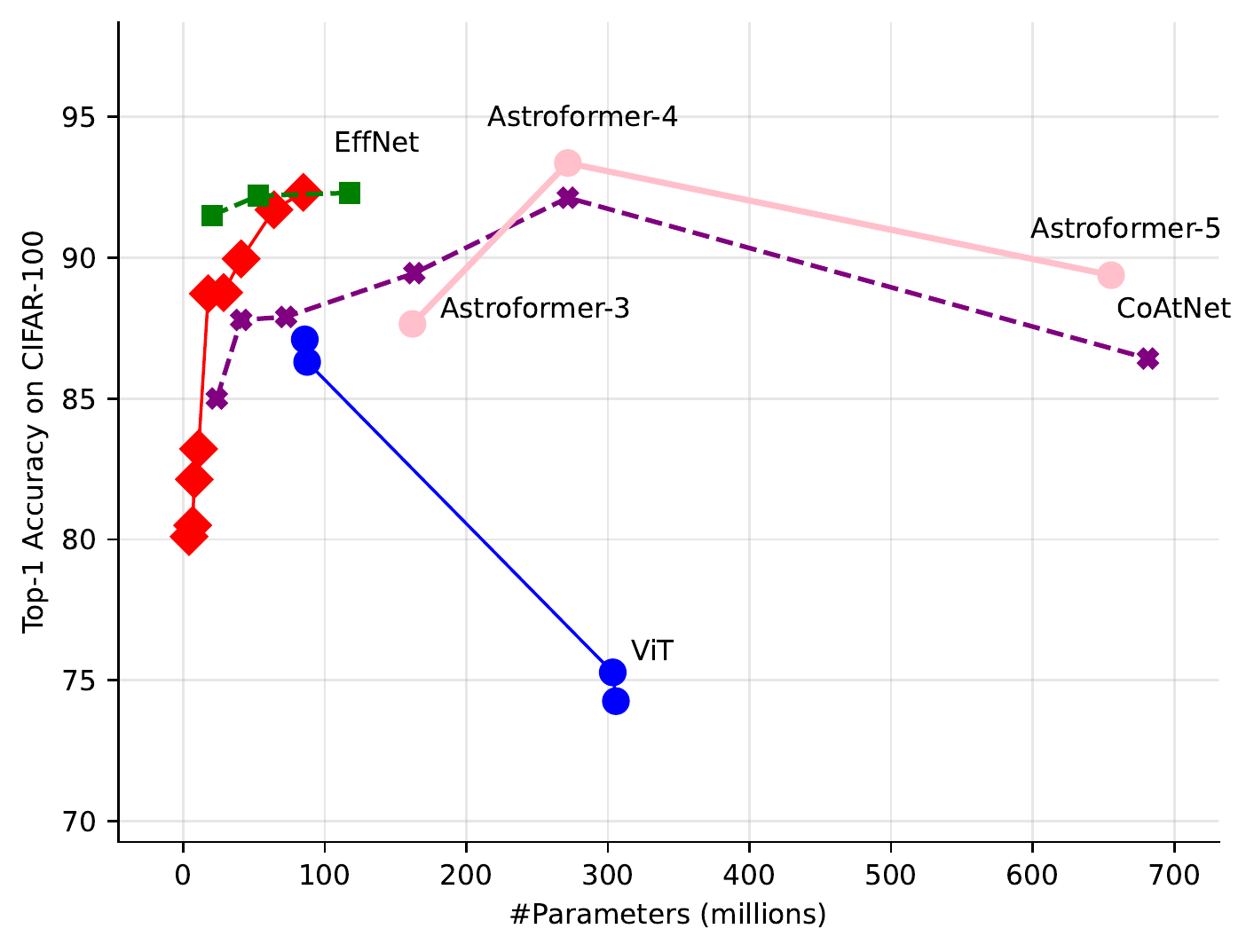} & 
        \includegraphics[width=0.5\textwidth]{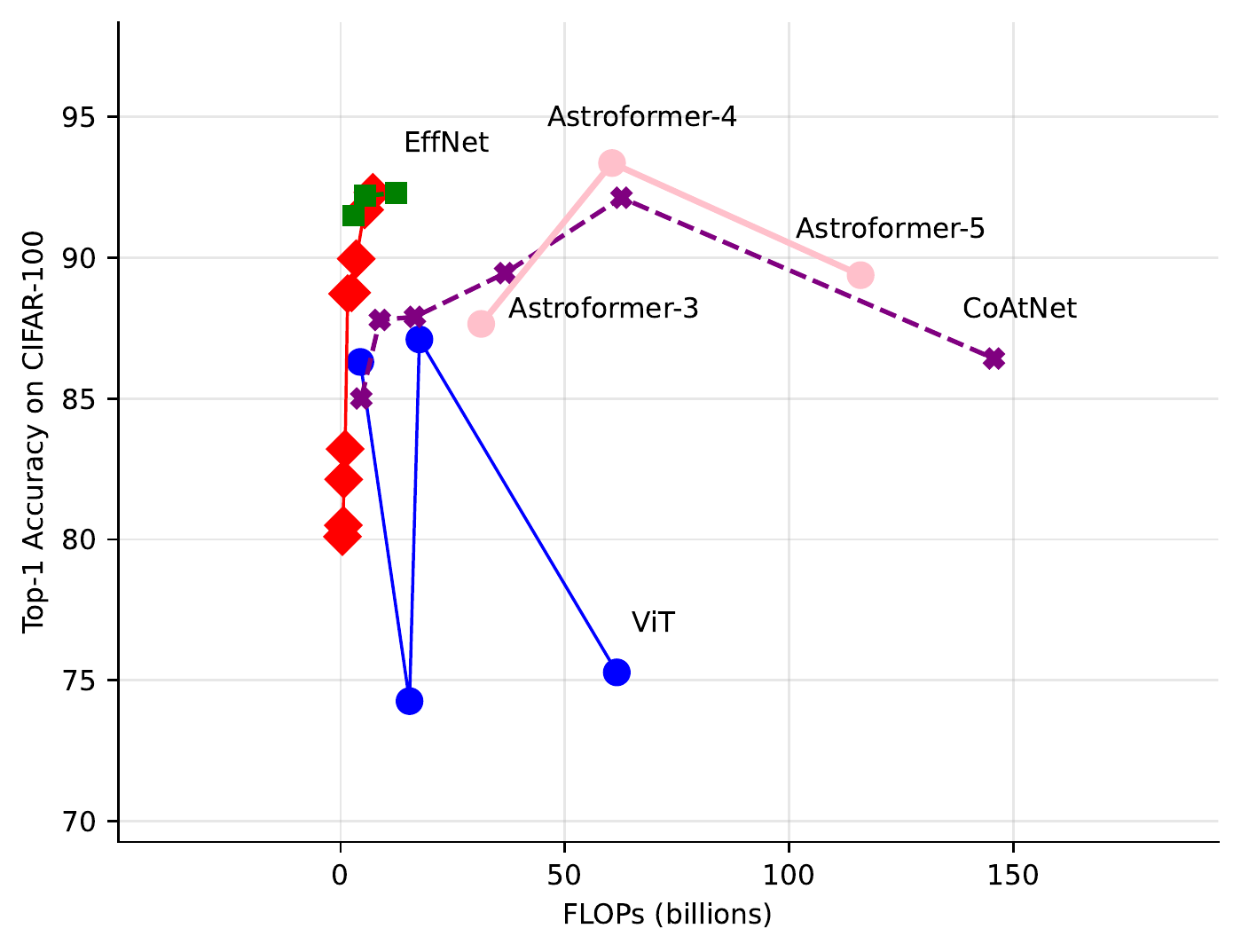} \\
        (a) & (b)
    \end{tabular}
    \caption{(a) The top-1 accuracy to parameter scaling curves for multiple models on the CIFAR-100 dataset. (b) The top-1 accuracy to FLOPs scaling curves for multiple models on the CIFAR-100 dataset. All these scaling curves are for the evaluation size of $224^2$. The data for this graph can be found in Table \ref{tab:cifar100scaling}}
    \label{fig:cifar100scalingfig}
\end{figure}

\begin{figure}[ht]
    \centering
    \begin{tabular}{cc}
        \includegraphics[width=0.5\textwidth]{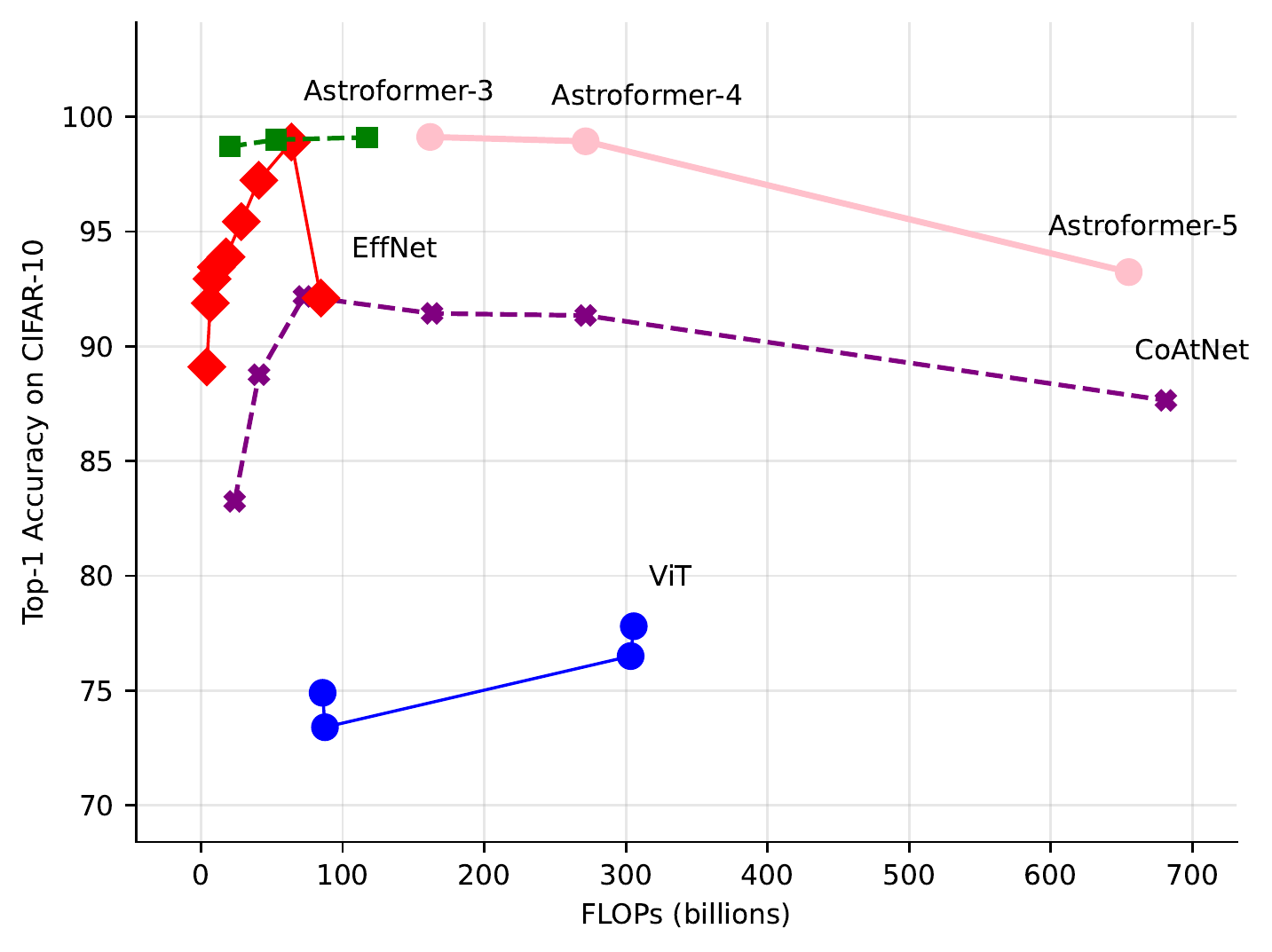} & 
        \includegraphics[width=0.5\textwidth]{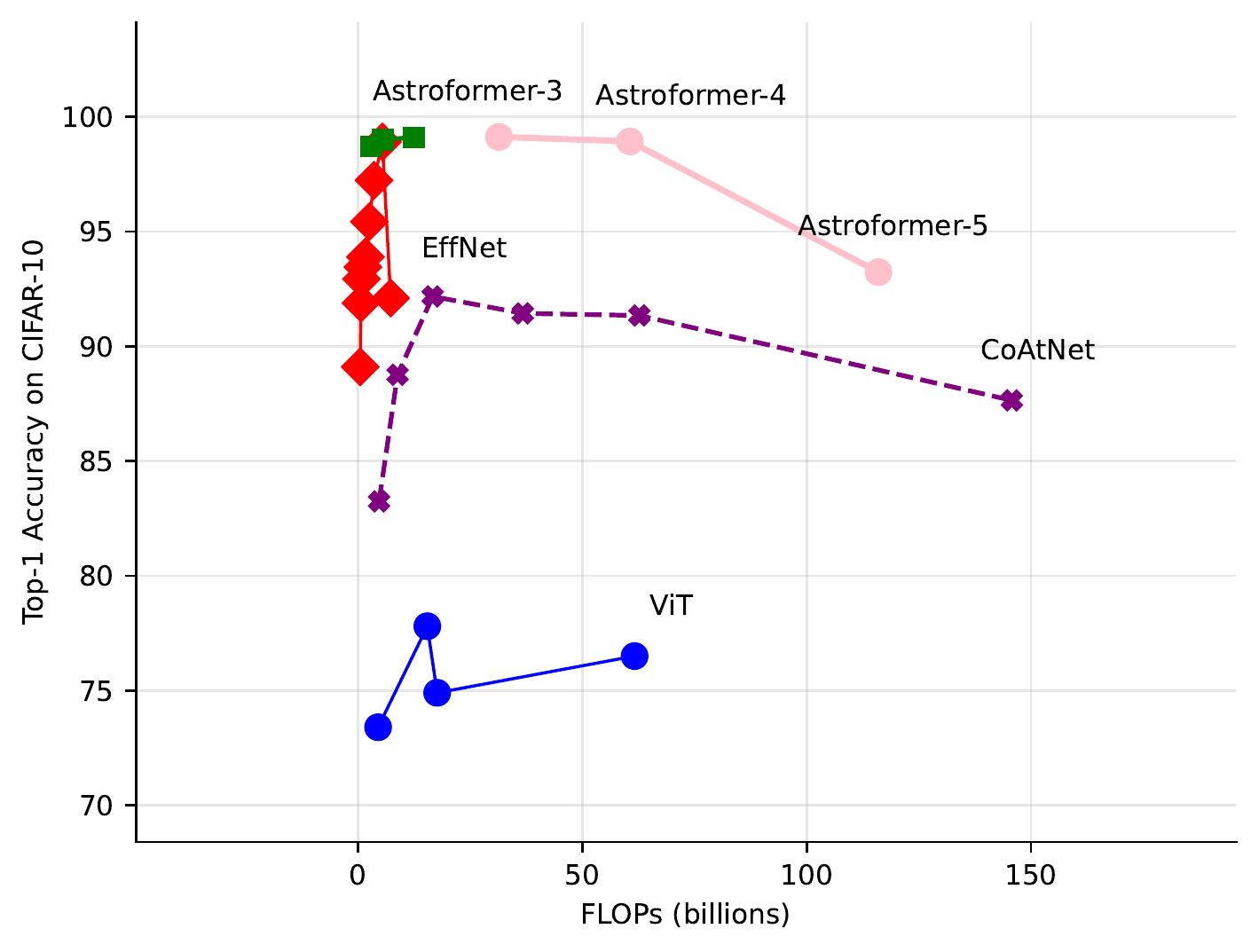} \\
        (a) & (b)
    \end{tabular}
    \caption{(a) The top-1 accuracy to parameter scaling curves for multiple models on the CIFAR-10 dataset. (b) The top-1 accuracy to FLOPs scaling curves for multiple models on the CIFAR-10 dataset. All these scaling curves are for the evaluation size of $224^2$. The data for this graph can be found in Table \ref{tab:cifar10scaling}}
    \label{fig:cifarscaling}
\end{figure}

\begin{figure}[ht]
    \centering
    \includegraphics[width=0.8\textwidth]{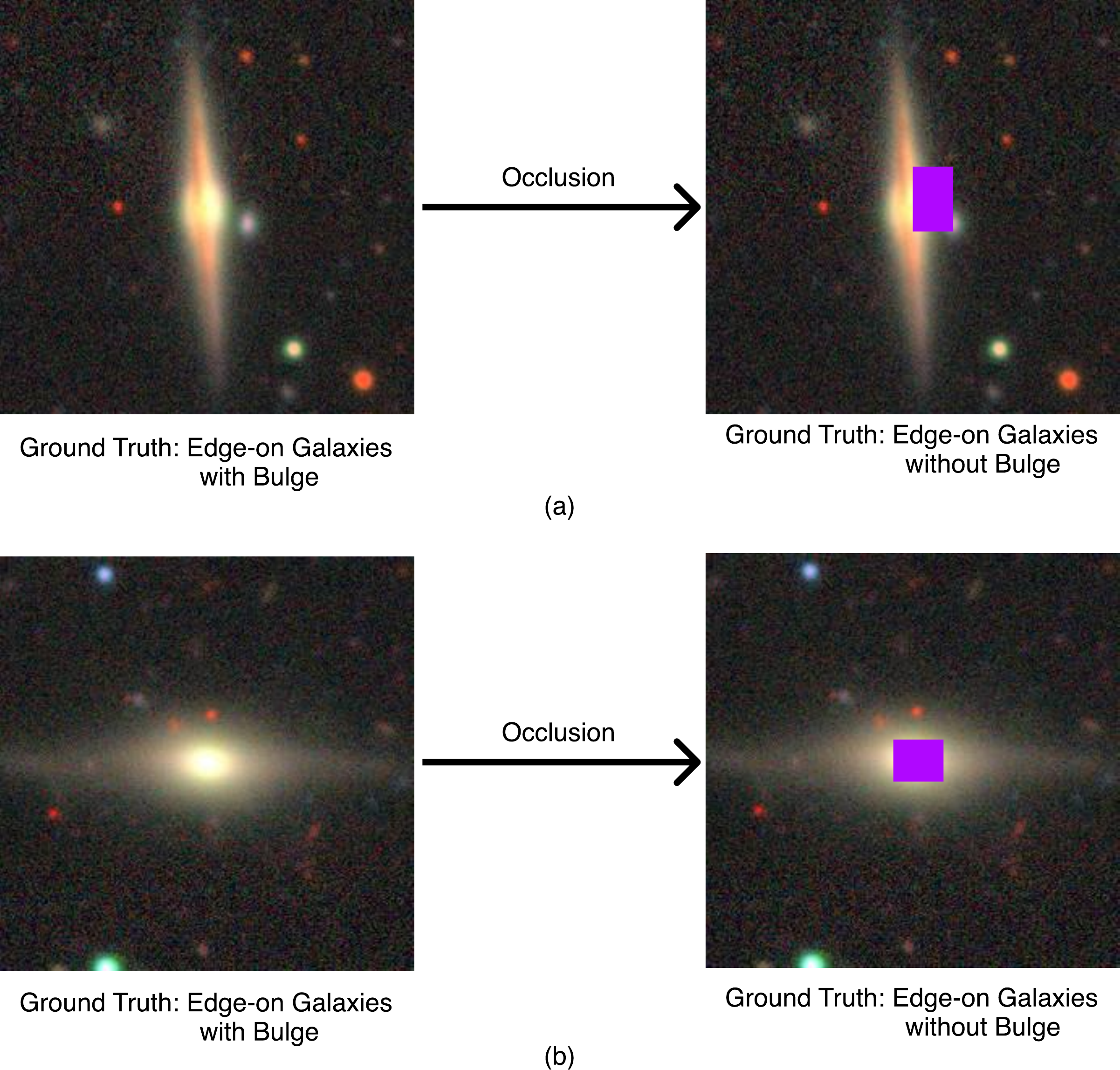}
    \caption{Random selection of augmented images after applying regional dropout-based augmentation techniques. We note the visible differences in the galaxy morphologies as well as observe visually why partial-occluded augmentation techniques do not work well for this dataset.}
    \label{fig:visualizeaugmentations}
\end{figure}

\newpage
\begin{figure}[ht]
    \centering
    \includegraphics[height=0.95\textheight]{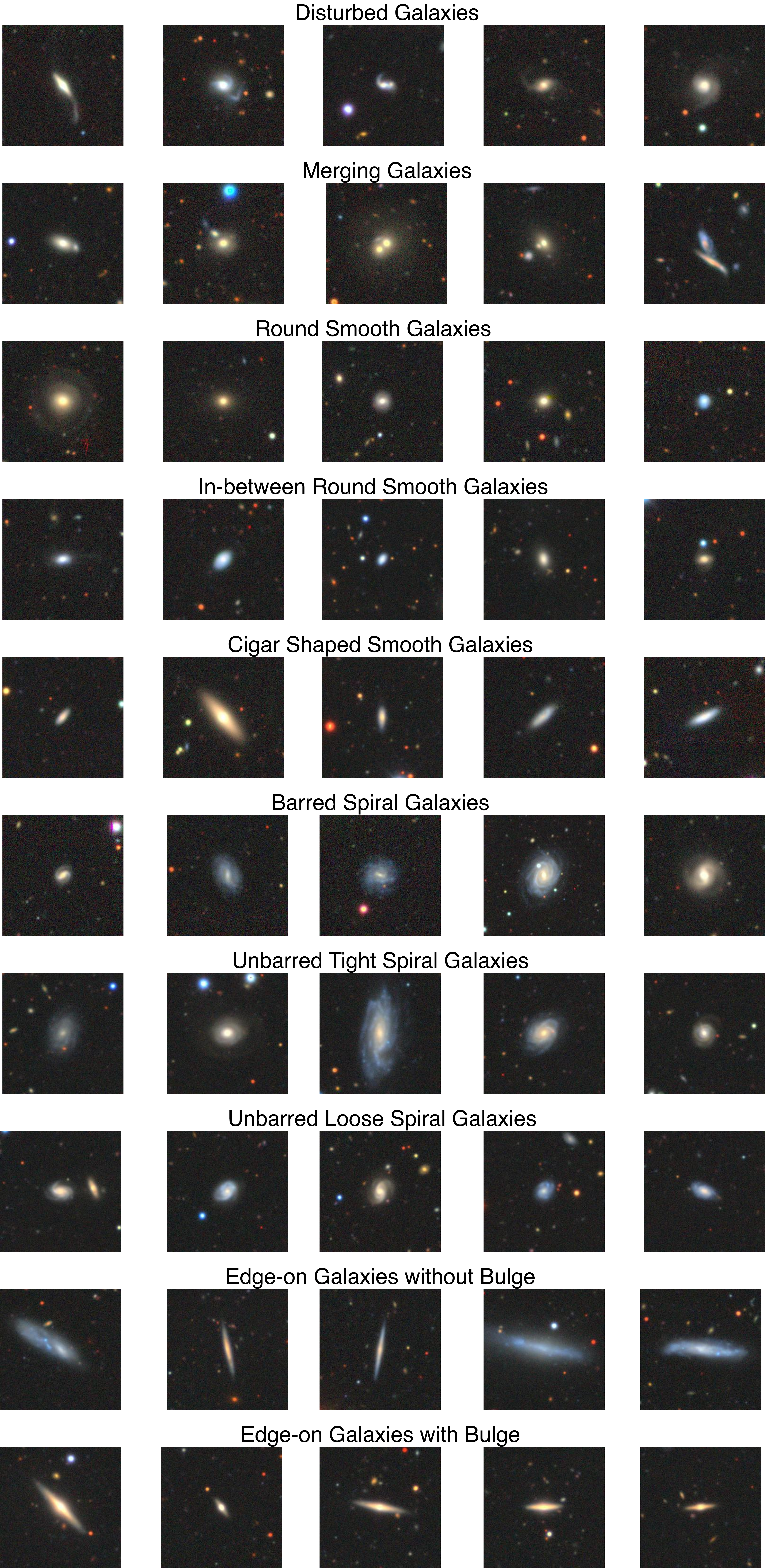}
    \caption{Raw data (galaxy images and labels) from the Galaxy10 DECals dataset. All images are a random sample from the training set.}
    \label{fig:rawimages}
\end{figure}

\end{document}